\documentclass{article}

\usepackage{microtype}
\usepackage{graphicx}
\usepackage{subcaption}
\usepackage{booktabs} 

\PassOptionsToPackage{hyphens}{url}
\usepackage{hyperref}
\usepackage{xurl}

\usepackage[accepted]{icml2026}

\usepackage{amsmath}
\usepackage{amssymb}
\usepackage{mathtools}
\usepackage{amsthm}
\usepackage{makecell}

\usepackage[capitalize,noabbrev]{cleveref}

\theoremstyle{plain}
\newtheorem{theorem}{Theorem}[section]

\theoremstyle{definition}
\newtheorem{definition}[theorem]{Definition}

\theoremstyle{remark}

\usepackage[disable,textsize=tiny]{todonotes}

\icmltitlerunning{Efficient Data Synthesis}

\begin{document}

\twocolumn[
  \icmltitle{An Information-Theoretic Criterion for Efficient Data Synthesis}



  \icmlsetsymbol{equal}{*}

  \begin{icmlauthorlist}
    \icmlauthor{Hanyu Li}{cfcs}
    \icmlauthor{Zhengqi Sun}{im}
    \icmlauthor{Xiaotie Deng}{cfcs}
  \end{icmlauthorlist}

  \icmlaffiliation{cfcs}{CFCS, School of Computer Science, Peking University, Beijing, China}
  \icmlaffiliation{im}{Department of Information Management, Peking University, Beijing, China}

  \icmlcorrespondingauthor{Hanyu Li}{lhydave@pku.edu.cn}

  \icmlkeywords{Machine Learning, ICML, Synthetic Data, Information Theory, Data Processing Inequality}

  \vskip 0.3in
]



\printAffiliationsAndNotice{}  

\begin{abstract}
Synthetic data becomes crucial for large language model training, but its effectiveness is highly inconsistent. We provide an information-theoretic account of this inconsistency: synthetic data improves a model only when the generation-training loop is information-open, i.e., shaped by external signals (verifiers, environments, or rubrics) that inject task-relevant information beyond the model's current distribution. When the loop is information-closed (relying on the model's own outputs without such signals), the data processing inequality ensures that task-relevant information can only decrease, making collapse a predicted outcome. Among information-open pipelines, both efficiency and generalization hinge on the meta-level of supervision: a coarser signal such as binary correctness treats all acceptable outputs as equivalent, so the behavior it teaches is not tied to any particular domain or surface form and generalizes naturally across tasks and domains. These observations lead to a guiding thesis: learning preferentially converges to the most information-efficient signal component available, which accelerates learning when that component is the intended one, but causes reward hacking when a spurious pattern happens to be simpler.

\end{abstract}

\section{Introduction}

Scaling laws have driven steady progress in large language models (LLMs), with performance improving predictably as model size, compute, and training data grow~\cite{kaplanScalingLawsNeural2020}. This paradigm, however, rests on a premise that is becoming increasingly fragile: the continued availability of abundant, high-quality real-world text. Recent analyses suggest that such data may be approaching practical limits~\cite{villalobosPositionWillWe2024}. At the same time, as LLMs are pushed toward long-horizon reasoning, multi-step planning, and agentic interaction, the most effective forms of supervision increasingly rely on explicit constraints, feedback, and verification---going well beyond what generic text corpora can provide. These two trends---data scarcity and the demand for richer supervision---have made synthetic data a central ingredient in modern training pipelines.

Yet synthetic data remains a double-edged sword. On one hand, generated supervision has produced substantial gains: verifier-guided synthesis reaches olympiad-level geometry~\cite{trinhSolvingOlympiadGeometry2024}, and reinforcement learning from verifiable rewards has driven major reasoning breakthroughs~\cite{guoDeepSeekR1IncentivizesReasoning2025,zhengOpenCodeInterpreterIntegratingCode2024}. On the other hand, iterative self-training---where models are trained on their own outputs---often leads to distribution collapse and progressive capability degradation~\cite{shumailovAIModelsCollapse2024,alemohammadSelfConsumingGenerativeModels2024,gerstgrasserModelCollapseInevitable2024}. Despite numerous empirical mitigation techniques, a principled account of when and why synthetic data helps or hurts is still lacking. This motivates a concrete question: \textit{under what conditions does synthetic data yield sustained improvements, and when does it induce degradation?}

We approach this question through an information-theoretic lens, anchored in the data processing inequality (DPI). The DPI states a simple but powerful fact: no computation on observed data can create new information about the underlying source. Applied to self-training, the implication is immediate. When a model trains on samples drawn from its own distribution and no additional task-relevant signal enters the loop, the pipeline is \textit{information-closed}: each iteration can only preserve or lose information about the true task. In practice, finite sampling and optimization errors make the loss strict, so the model's knowledge of the task degrades over iterations. Model collapse is thus not an anomaly---it is the predicted outcome of a closed information loop.

This DPI argument, however, is too pessimistic on its own: it cannot explain why synthetic data so often works spectacularly well. The resolution is that successful pipelines are not actually closed systems. They incorporate \textit{external signals}---verifiers, rubrics, environments, or fixed teacher models---that carry task-relevant information independent of the current model state. Once we account for these signals, the DPI argument sharpens rather than breaks: the pessimistic monotonicity holds only when such signals are absent. When an external signal is present, the relevant information bound includes not just the training data but also what the signal contributes beyond it. The question then shifts from ``can synthetic data add information?'' to ``how much task-relevant information does the external signal carry beyond what the model already knows?'' Sustained improvement is possible precisely when this quantity remains positive---that is, when the loop stays \textit{information-open}.

Given that a pipeline is information-open, a natural follow-up is: \textit{how efficiently} does it inject information? We argue that a key determinant is what we call the \textit{meta-level} of the supervision signal---the granularity at which it distinguishes outputs. A low meta-level signal specifies a particular target (e.g., a single reference answer); a high meta-level signal specifies only a coarse property (e.g., ``is the answer correct?'') and treats all outputs satisfying that property as interchangeable. Because a high meta-level signal collapses many surface-level differences into a single judgment, each unit of supervision constrains a much larger region of the output space, making learning markedly more sample-efficient.

A natural implication, and a guiding thesis of this paper, is that \textit{learning preferentially converges to the most information-efficient signal component available} in the training data. This thesis cuts both ways: when the intended signal is the most efficient one, it explains why high meta-level supervision produces rapid learning and strong cross-domain generalization; when a spurious signal happens to operate at a coarser granularity than the intended one, the same dynamic produces reward hacking.

\paragraph{Conflict of Interest Disclosure.} The authors declare no financial conflicts of interest related to the work presented in this paper.

\section{Case Studies on Synthetic Data}

We begin by defining precisely what we mean by synthetic data and identifying three recurring empirical patterns that any theoretical account must explain.

We use \textit{synthetic} in a mechanism-based sense: supervision is synthetic if it is produced by an automated pipeline (often an LLM generator plus tools) and used for training. Under this definition, ``synthetic'' does \textit{not} mean ``fake'': it only indicates that the supervision originates from an automated procedure rather than direct human annotation.

In much of the literature, ``synthetic data'' is used roughly synonymously with \textit{model-generated text}. Our definition is broader in \textit{what} counts as supervision (not only instruction--response pairs but also preferences, rewards, verifier feedback, and retained trajectories). Concretely, even when prompts (and sometimes final answers) come from real datasets, reinforcement learning with verifiable rewards (RLVR) can still generate intermediate trajectories that are retained only if they pass verifiable checks; those retained trajectories function as \textit{synthetic supervision} in our sense. With this terminology, we summarize three recurring patterns below: closed-loop self-training (failure), hard verifiers (success I), and soft but stable references (success II).

\paragraph{Failure (closed-loop self-training).}
When training primarily recycles the model's own generations \textit{without} a persistent external signal, the loop tends to contract the output distribution: tail patterns are under-sampled and errors become self-reinforcing~\cite{shumailovAIModelsCollapse2024,alemohammadSelfConsumingGenerativeModels2024}. The effect is strongest when synthetic data progressively \textit{replaces} real data, whereas mixing real data each iteration can mitigate collapse~\cite{gerstgrasserModelCollapseInevitable2024}. Synthetic-heavy regimes can also distort scaling behavior by weakening long-tail modeling~\cite{dohmatobTaleTailsModel2024}.

\paragraph{Success I (hard, verifiable signals).}
A high-reliability path is \textit{generate-and-verify}: candidates are filtered, ranked, or rewarded by an external procedure (proof checker, compiler/tests, benchmark harness), via rejection sampling, best-of-$N$, or RLVR~\cite{guoDeepSeekR1IncentivizesReasoning2025,duoJudgeRLVRJudgeFirst2026}. The headline outcome is \emph{qualitative performance jumps anchored by checkable signals}: in math, verifier-guided search and distillation can reach medal-level performance (e.g., IMO 2024 silver-medal standard)~\cite{hubertOlympiadlevelFormalMathematical2025,alphaproofAIAchievesSilvermedal2024}, and AlphaGeometry-style constrained synthesis reaches gold-medalist-level geometry solving~\cite{trinhSolvingOlympiadGeometry2024,chervonyiGoldmedalistPerformanceSolving2025}. In formal proving, similar recipes scale to strong benchmark performance under Lean verification~\cite{renDeepSeekProverV2AdvancingFormal2025,chenSeedProverDeepBroad2025,chenSeedProver15Mastering2025}. In discovery loops, formal/executable targets let systems like FunSearch, LegoNE, and AlphaEvolve report algorithmic improvements that are \emph{certified} by solvers or proof obligations (including guarantees that surpass prior best known human-designed ones)~\cite{romera-paredesMathematicalDiscoveriesProgram2024,liDiscoveringExpertLevelNash2025,novikovAlphaEvolveCodingAgent2025}.

In code and systems, execution provides the verifier: AlphaCode and OpenCodeInterpreter rely on tests/runtime feedback~\cite{liCompetitionlevelCodeGeneration2022,zhengOpenCodeInterpreterIntegratingCode2024}, and agentic pipelines evaluate artifacts in containerized environments~\cite{deepseek-aiDeepSeekV32PushingFrontier2025,xiaomillm-coreteamMiMoV2FlashTechnicalReport2026}. The same template supports optimization loops in SAT solving and code evolution~\cite{sunAutoSATAutomaticallyOptimize2024,sunAutomaticallyDiscoveringHeuristics2025} and extends to kernel performance engineering with correctness checks plus runtime selection~\cite{chenAutomatingGPUKernel2025,ouyangKernelBenchCanLLMs2025}.

\paragraph{Success II (soft but stable references).}
For objectives without crisp correctness predicates, stability comes from \textit{holding the reference fixed}: teacher models, preference/rubric constraints, constitutions, or evaluation harnesses that do not co-move freely with the student. Examples include distillation and preference-based alignment~\cite{hintonDistillingKnowledgeNeural2015,gudibandeFalsePromiseImitating2024,tunstallZephyrDirectDistillation2024}, on-policy distillation with teacher-side judging~\cite{xiaomillm-coreteamMiMoV2FlashTechnicalReport2026,agarwalOnPolicyDistillationLanguage2024,luOnPolicyDistillation2025}, fixed-principle self-improvement~\cite{baiConstitutionalAIHarmlessness2022}, staged filtering pipelines~\cite{gudibandeFalsePromiseImitating2024}, and iterative search under a fixed evaluator~\cite{yuanSelfRewardingLanguageModels2024}.

\section{A Theoretical Analysis through Data Processing Inequality}
\label{sec:theory}

As is discussed in the case studies, some synthetic-data pipelines behave like a closed system that only recycles the model's own distribution, while others remain open because they are persistently shaped by signals not determined by the model itself. This section makes that distinction precise using the data processing inequality (DPI), and then states a practical criterion for when synthetic data can be effective.

Our goal is not to deny DPI, but to clarify the modeling boundary: DPI applies to an \textit{information-closed} Markov abstraction. Synthetic data can be effective only when the actual pipeline is \textit{not} information-closed, i.e., when it contains additional variables (external signals, verifiers, fixed references, environments) that are correlated with the underlying source $X$ and therefore cannot be absorbed into a single fixed channel.

\subsection{Markov Chain Formalization}

Let $X$ denote the underlying source of structure we ultimately care about (e.g., the task distribution that induces correctness), $D$ denote the finite observations used for learning (e.g., a training dataset), and $Z$ denote the learned model state (e.g., parameters).

We model training as a (possibly randomized) procedure that maps $D$ to $Z$. To keep the role of stochasticity explicit, introduce an auxiliary random variable $R$ that aggregates all internal randomness used by the algorithm (initialization, minibatch sampling, optimizer noise, decoding randomness, etc.). The training procedure can then be written as
\[
Z  =  f_{\mathrm{train}}(D,R).
\]
The \textit{information-closed} assumption is that the procedure has no additional side information about $X$ beyond what is already in $D$; equivalently,
\[
P(Z \mid X,D)  =  P(Z \mid D),
\]
so that $X \to D \to Z$ forms a Markov chain.

Under this assumption, DPI yields
\[
I(X;Z) \le  I(X;D),
\]
where $I(\cdot;\cdot)$ is Shannon mutual information. A compact proof follows from the chain rule:
\begin{align*}
I(X;D,Z)&=I(X;D)+I(X;Z \mid D)\\
&=I(X;Z)+I(X;D \mid Z),
\end{align*}
and Markovness implies $I(X;Z \mid D)=0$; hence $I(X;Z)=I(X;D)-I(X;D \mid Z)\le I(X;D)$.

Finally, it is useful to separate ``randomness'' from ``information.'' Internal randomness $R$ can change optimization dynamics and exploration, but it does not by itself increase information about $X$ unless it is coupled to some variable correlated with $X$. Formally, if $R$ carries no information about $X$ beyond $D$, then $I(X;D,R)=I(X;D)$ and DPI applied to $X\to (D,R)\to Z$ still gives the same upper bound.

\subsection{Failure of Self-Training Loops}
\label{sec:closed-loop}

Consider a purely self-training loop. At iteration $t$, a model state $Z_t$ generates synthetic supervision $D^{\mathrm{syn}}_t$ via a (possibly stochastic) generation procedure $f_{\mathrm{gen}}$ (the ``synthetic data'' in the mechanism-based sense of this paper), and the next model state is obtained by training on that synthetic supervision:
\[
D^{\mathrm{syn}}_t \sim f_{\mathrm{gen}}(Z_t),\qquad
Z_{t+1}=f_{\mathrm{train}}(D^{\mathrm{syn}}_t,R_t).
\]
If the pipeline introduces no persistent external signal (no fresh human data, no environment feedback, no fixed verifier/reference that is independent of the current model state), then the iteration is information-closed in the sense that all variables downstream of $Z_t$ are generated from $Z_t$ and internal randomness. In that case we have the Markov chain
\[
X  \to  Z_t  \to  D^{\mathrm{syn}}_t  \to  Z_{t+1}.
\]

Applying DPI to this chain yields a monotonicity statement:
\[
I(X;Z_{t+1})  \le  I(X;Z_t)\qquad \text{for all } t.
\]
This captures the core limitation of closed-loop self-training: when synthetic data is sampled from the model's own distribution and then fed back without any independent signal, the loop cannot systematically increase the model's information about $X$. In principle, the best possible outcome is information preservation; in practice, finite sampling, capacity limits, and approximation errors typically make the inequality strict, so performance degrades over iterations.

As a concrete instance of this phenomenon beyond the well-studied model collapse setting, we observed the following in ongoing work on instruction-following evaluation. A judge model was trained to assess whether LLM responses satisfy user-specified rubrics (e.g., ``is the reply concise?'', ``does the response follow the requested format?''). In one training variant, the judge was asked to \textit{infer} implicit rubrics from the conversation context and then evaluate compliance---without any external ground-truth rubrics to anchor its judgments. When this judge was trained iteratively (using its own previous outputs as training signal for the next iteration, each time in a fresh context), its acceptance rate decreased monotonically across iterations: it became progressively more critical of outputs, eventually rejecting nearly everything. The degradation was not because the evaluated outputs worsened---they were held constant---but because the judge's internal criteria drifted without external correction. Each iteration appeared locally reasonable (the judge produced plausible critiques), yet the cumulative effect was systematic quality collapse invisible from within the loop. This is precisely the closed-loop monotonicity predicted above: without a stable external anchor, the ``signal'' becomes a function of the model's own evolving state.

\subsection{Information-Openness and Effective Synthetic Data}
\label{sec:openness}

The preceding limitation is not a special property of language models; it is a direct consequence of modeling the loop as information-closed. The key question is therefore not whether data is ``synthetic,'' but whether the pipeline is \textit{information-open} with respect to $X$.

Modern successful pipelines typically introduce \textit{external signals} that shape either the generation stage or the training stage. Here an ``external signal'' means any random variable $S$ that (i) influences which synthetic outputs are retained or amplified, and (ii) is not itself determined solely by the model's current output distribution. Examples include executable verifiers, environments, frozen judges or rubrics, and fixed references that do not freely co-move with the current model. Formally, we call a pipeline \textit{information-open} if $I(X;S \mid D)>0$, i.e., the external signal carries information about $X$ beyond what is already contained in $D$.

Once such an $S$ exists, treating the whole pipeline as a single fixed channel $P(Z \mid D)$ becomes misleading: the learned model is no longer a function of $D$ alone. The correct information-theoretic object is the augmented observation $(D,S)$. In this view, synthetic data can be effective only insofar as $S$ provides task-relevant constraints or feedback that are correlated with $X$ and remain stable enough to act as a signal across iterations.

\subsection{Refining DPI via External Signals}

We now refine the DPI argument by accounting for external signals: we do not violate DPI; rather, the relevant observation is the augmented pair $(D,S)$ rather than $D$ alone, so the pessimistic closed-loop DPI monotonicity need not apply.

Let $S$ denote an external signal used by the pipeline, which is a random variable that may depend on $X$ and $D$. We model training as
\[
Z  =  f_{\mathrm{train}}(D,S,R).
\]
Then $X \to (D,S) \to Z$ is a Markov chain, and DPI gives
\[
I(X;Z) \le  I(X;D,S).
\]
By the chain rule,
\[
I(X;D,S)=I(X;D)+I(X;S \mid D).
\]
This yields a simple criterion for effectiveness: a synthetic-data pipeline can improve the model's information about $X$ beyond what is available in $D$ only through the additional conditional mutual information term $I(X;S \mid D)$. Equivalently, synthetic data is effective when the external signal contributes nontrivial information about $X$ that is not already implied by $D$.

For iterative synthetic training, the same idea produces a per-iteration bound. If $Z_{t+1}$ is produced from $(Z_t,S)$ (possibly via intermediate synthetic data), then $X \to (Z_t,S) \to Z_{t+1}$ and
\[
I(X;Z_{t+1})  \le  I(X;Z_t,S)
 =  I(X;Z_t) + I(X;S \mid Z_t).
\]
This clarifies when an iteration can avoid closed-loop monotonicity: without $S$, the additional capacity term $I(X;S \mid Z_t)$ vanishes and the bound reduces to the closed-loop monotonicity $I(X;Z_{t+1}) \le I(X;Z_t)$; with an $S$ that remains meaningfully correlated with $X$ conditioned on $Z_t$, the loop can, in principle, introduce new task-relevant constraints and sustain improvement.

This also clarifies the role of stochasticity. Random sampling does not by itself inject information about $X$---it merely generates a diverse pool of candidate outputs. The external signal then acts as a filter on this pool, retaining or reinforcing candidates that happen to align with the task. Neither component alone suffices: without randomness, the model has no diversity to select from; without the signal, diverse candidates cannot be distinguished. Effective synthetic pipelines thus operate as an explore-then-select loop, where stochasticity provides exploration and $S$ provides selection pressure.

The per-iteration bound is also consistent with the observation that the \textit{structure} of noise matters far more than its magnitude. Unbiased noise (random disagreements that are independent across samples) cancels in expectation across gradient updates and does not systematically corrupt learning. By contrast, systematic bias accumulates coherently; and if the signal co-evolves with the student model (e.g., a judge that drifts without calibration, as in \Cref{sec:closed-loop}), it ceases to be genuinely external and the loop effectively re-closes. We return to this asymmetry with detailed empirical evidence in \Cref{sec:efficiency}.

A striking engineering validation of this stability requirement comes from Anthropic's generator-evaluator harness for long-running agentic applications~\cite{rajasekaranHarnessDesignLongrunning2026}, which architecturally separates a generator agent (that proposes solutions) from a fixed evaluator agent (that judges quality). The evaluator is deliberately held constant and does not adapt to the generator's outputs; it functions as a stable external signal in precisely the sense of \Cref{sec:openness}. This design yielded large gains over single-agent approaches where the same model both generates and evaluates---exactly as the framework predicts: a co-moving evaluator closes the loop and triggers degradation, while a fixed evaluator keeps the loop information-open.

\subsection{How External Signals Enter Training}

To make the external-signal framework concrete, we compare three representative methods that form a natural progression in how much they rely on an external signal: supervised fine-tuning (SFT), rejection-sampling fine-tuning (RFT), and reinforcement learning with verifiable rewards (RLVR, often implemented with GRPO-style updates~\cite{shaoDeepSeekMathPushingLimits2024}).

In standard SFT, we optimize log-likelihood on a fixed dataset:
\[
\max_{\theta}\; \mathbb{E}_{(x,y)\sim q_{\text{data}}}\bigl[\log \pi_\theta(y\mid x)\bigr].
\]
Each training pair $(x,y^\star)$ contributes a gradient $g_{\text{SFT}}=\nabla_\theta\log\pi_\theta(y^\star\mid x)$ that simply increases the probability of a provided target. There is no external signal beyond the dataset itself.

RFT introduces an external signal through \textit{filtering}. The current model proposes candidates $y\sim\pi_{\theta_t}(\cdot\mid x)$, and an acceptance test $a(x,y;S)\in\{0,1\}$ (e.g., a correctness check) retains only those that pass. SFT is then run on the retained samples, whose distribution $\pi^{\text{acc}}_{\theta_t}(y\mid x,S)\propto \pi_{\theta_t}(y\mid x)\,a(x,y;S)$ is jointly shaped by the model proposal and the signal. The signal reshapes \textit{which samples become training data}.

RLVR introduces the external signal directly into the \textit{gradient}. A verifier returns a reward $r(x,y;S)$ for model-generated outputs, and the policy gradient
\[
g_{\text{RLVR}}(\theta; x)
\propto \mathbb{E}_{y\sim \pi_\theta(\cdot\mid x)}\!\left[
A(x,y;S)\,\nabla_\theta \log \pi_\theta(y\mid x)
\right],
\]
where $A(x,y;S)\coloneqq r(x,y;S)-b(x)$ is an advantage relative to a baseline, directly weights each sample's contribution by the signal. Outputs with above-baseline reward get positive weight; below-baseline outputs get negative weight. Unlike SFT, RLVR does not specify any particular target output---it selectively reinforces or suppresses model-generated behaviors based on the external signal alone.

The SFT--RFT--RLVR progression above uses hard verifiable signals, but the same $(D,S)$ formalism applies when $S$ is a soft reference---such as a fixed rubric evaluated by an LLM judge (\Cref{sec:efficiency}) or a held-out evaluator that does not co-adapt with the student. The critical requirement is not hardness of verification but \textit{stability}: $S$ must not freely co-move with $Z_t$, so that $I(X;S \mid Z_t)$ remains positive across iterations.

\section{Information Injection Efficiency}
\label{sec:efficiency}

The previous section establishes an information-openness criterion: synthetic data can improve a model only when an external signal injects task-relevant information that the model cannot generate on its own. This criterion, while precise, is largely a formalization of an intuitive observation---closed loops degrade, open loops can improve. The more substantive question is what happens \textit{among} information-open pipelines: given that some external signal exists, why do different methods exhibit orders-of-magnitude differences in sample efficiency and generalization? Some pipelines require massive volumes of synthetic supervision to yield marginal gains, while others trigger broad, cross-task behavioral changes with comparatively little data. This section develops a quantitative framework that explains this variance and identifies the structural property of the supervision signal that determines efficiency.

\subsection{Meta-Level of Information Injection}
\label{sec:meta-level}

We define \textit{information injection efficiency} as the fraction of supervision information that is directed at task-relevant distinctions, rather than at irrelevant within-class details. Intuitively, a synthetic-data step is efficient if it eliminates much of the model's uncertainty about what it should output, rather than merely adding more surface-form examples. To connect this intuition to information theory with minimal notation, we fix a prompt random variable $Q$ and an output random variable $Y$ taking values in an output space $\mathcal{Y}$ (an entire model response). A synthetic-data pipeline produces a supervision signal $S$ about the pair $(Q,Y)$ (e.g., accept/reject, a score, a preference). The uncertainty reduction caused by one such supervised interaction is $I(Y;S \mid Q)$---the number of bits by which $S$ reduces the model's uncertainty over its outputs.

The remaining question is what makes this quantity large or small. The key idea of \textit{meta-level} can be stated informally: different supervision signals care about different aspects of an output. Some signals care about a very specific surface form (a unique target string), while others only care about a coarse property (e.g., ``is it acceptable?'', ``does it compile?''). A higher meta-level signal \textit{ignores} many superficial differences between outputs and only distinguishes a small number of behaviorally meaningful categories. When supervision is higher meta-level, each labeled example can constrain a larger portion of the output space because it rules out whole categories at once, instead of identifying a single preferred surface form.

A simple quantitative example makes this concrete. Fix a prompt $q$ and suppose there are $M$ different answers that are all equally acceptable under an external criterion (e.g., multiple correct proofs, multiple valid patches). Denote this acceptable set by $\mathcal{A}\subseteq\mathcal{Y}$ with $|\mathcal{A}|=M$. Assume for clarity that the model's uncertainty within $\mathcal{A}$ is roughly uniform.\footnote{Uniformity is not necessary for the argument; it only makes the bound explicit. In general the within-class entropy is at most $\log M$.}

\paragraph{High meta-level supervision (class-level signal).}
A high meta-level pipeline provides a coarse signal that only cares about whether the output falls inside the acceptable set:
\[
S_{\mathrm{high}}(q,y) = \mathbf{1}\{y\in\mathcal{A}\}.
\]
This signal deliberately treats all $M$ acceptable answers as equivalent: it does not distinguish between different members of $\mathcal{A}$. The uncertainty reduction contributed by observing this signal is
\[
I(Y;S_{\mathrm{high}} \mid Q{=}q) = H(S_{\mathrm{high}} \mid Q{=}q),
\]
because $S_{\mathrm{high}}$ is a deterministic function of $Y$ given $q$. If the model is unsure whether it will land inside $\mathcal{A}$ (accept) or outside (reject), then $H(S_{\mathrm{high}} \mid Q{=}q)$ can be close to one bit, meaning each feedback instance can remove close to one bit of uncertainty by collapsing the space into just two categories: acceptable vs.\ unacceptable. Crucially, this signal never spends any information budget telling the model \textit{which} acceptable answer to prefer; it only tells the model to move probability mass into the acceptable region.

\paragraph{Low meta-level supervision (instance-level identification).}
A low meta-level pipeline instead provides a \textit{specific target} $y^\star\in\mathcal{A}$ and trains the model to reproduce that particular surface form (e.g., a single canonical reference answer):
\[
S_{\mathrm{low}}(q,y) = \mathbf{1}\{y=y^\star\}.
\]
This signal is much finer: it distinguishes one specific acceptable output from the other $M-1$ acceptable outputs that are, from the external criterion's perspective, equally good. Under the uniform-within-$\mathcal{A}$ assumption, the probability that the model outputs exactly $y^\star$ is about $1/M$, hence the information gain of this indicator is approximately
\[
H(S_{\mathrm{low}} \mid Q{=}q) \approx  h_2(1/M),
\]
where $h_2(x)= -x\log x - (1{-}x)\log(1{-}x)$ is binary entropy. For large $M$ (which is typical in practice), $h_2(1/M)$ is very close to zero, so each observation of $S_{\mathrm{low}}$ provides a very weak information signal about the model's output space: almost always $S_{\mathrm{low}}=0$, which tells the learner little about what to do next.

More importantly, reproducing $y^\star$ forces the learner to resolve an additional identification burden inside the acceptable set. If the acceptable answers are all treated as different targets, then ``being correct'' is not enough; the learner must also identify \textit{which} member of the acceptable set is desired. The maximum number of bits needed to identify a particular member among $M$ acceptable ones is $\log M$. This can be seen directly from the within-class entropy: under uniformity,
\[
H(Y \mid Q{=}q,\, Y\in\mathcal{A})  =  \log M.
\]
Instance-level supervision implicitly attempts to drive this within-class entropy toward zero by selecting a single representative, while high meta-level supervision does not. Therefore, when $M$ is large, a low meta-level objective can waste a substantial portion of supervision capacity on distinctions that are irrelevant to acceptability, while the high meta-level objective concentrates only on the coarse distinction that matters.

In verifiable domains such as mathematics, this is the key difference between SFT and RLVR. SFT imitates a specific reference $y^\star$, spending capacity on identifying which particular acceptable answer to produce---a distinction irrelevant to the task criterion. RLVR uses verifiable correctness (membership in $\mathcal{A}$), treating all correct solutions as equivalent and training the model to reach \textit{any} correct answer. This ties learning to an invariant criterion, letting the model discover and reuse any verifier-satisfying strategy, which typically yields stronger out-of-distribution robustness.

A direct validation of this prediction is JudgeRLVR~\cite{duoJudgeRLVRJudgeFirst2026}, which trains a judge model on mathematics using only binary correctness as supervision---the signal is purely whether the response answers the query correctly, without any domain-specific rubric or reference solution. Because this signal treats all correct solutions as equivalent and does not depend on the domain being mathematics, the trained judge transfers to all other domains (code, logic, general reasoning) without further supervision, outperforming judges trained with domain-specific rewards. The mechanism is exactly what the meta-level analysis predicts: binary correctness is the coarsest task-relevant signal, so every bit of learned judgment generalizes across domains rather than being spent on domain-specific distinctions.

\textit{In this sense, generalization is precisely the ability to correctly ``forget'' irrelevant differences---to learn at a higher meta-level.}

\subsection{Formalizing the Efficiency Gap}

The intuitive comparison above can be generalized into a decomposition theorem that applies to any supervision signal and any task-relevant distinction.

\begin{definition}[Task-Relevant Partition]
\label{def:partition}
A \textbf{task-relevant partition} $\pi=\{C_1,\dots,C_K\}$ of $\mathcal{Y}$ is a partition such that the external evaluation criterion assigns the same judgment to all elements within the same block. The induced \textbf{quotient variable} $[Y]_\pi\in\{1,\dots,K\}$ is defined by $[Y]_\pi=k$ whenever $Y\in C_k$. A supervision signal $S=\sigma(Q,Y)$ is \textbf{$\pi$-measurable} if $S$ depends on $Y$ only through $[Y]_\pi$.
\end{definition}

The accept/reject partition of the preceding example ($K=2$, $C_1=\mathcal{A}$, $C_2=\mathcal{Y}\setminus\mathcal{A}$) is a special case. In general, $\pi$ can encode any evaluation structure: multi-level rubric scores, partial-credit grading, or tiered quality judgments.

\begin{theorem}[Decomposition of Supervision Information]
\label{thm:decomp}
For any supervision signal $S$ and task-relevant partition $\pi$,
\begin{equation}
\label{eq:decomp}
  I(Y;\,S\mid Q)
  \;=\;\underbrace{I\bigl([Y]_\pi;\,S\mid Q\bigr)}_{\textup{task-relevant gain}}
  \;+\;\underbrace{I\bigl(Y;\,S\mid [Y]_\pi,\,Q\bigr)}_{\textup{within-class gain}}.
\end{equation}
Assuming $I(Y;\,S\mid Q)>0$, we define the \textbf{task-relevant efficiency} $\eta_\pi(S):= I([Y]_\pi;\,S\mid Q)\,/\,I(Y;\,S\mid Q)$. Then: \textup{(i)}~$\eta_\pi(S)\in[0,1]$; \textup{(ii)}~$\eta_\pi(S)=1$ iff $Y\perp\!\!\!\perp S\mid [Y]_\pi,Q$; \textup{(iii)}~every $\pi$-measurable signal satisfies $\eta_\pi(S)=1$.
\end{theorem}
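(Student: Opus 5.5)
The plan is to obtain \eqref{eq:decomp} from the chain rule for mutual information, applied conditionally on $Q$. The key observation is that $[Y]_\pi$ is a deterministic function of $Y$, so the pair $(Y,[Y]_\pi)$ carries exactly the same information as $Y$ alone. Concretely, I will expand $I(Y,[Y]_\pi;\,S\mid Q)$ in two ways:
\begin{align*}
I(Y,[Y]_\pi;\,S\mid Q) &= I([Y]_\pi;\,S\mid Q) + I(Y;\,S\mid [Y]_\pi,Q),\\
I(Y,[Y]_\pi;\,S\mid Q) &= I(Y;\,S\mid Q) + I([Y]_\pi;\,S\mid Y,Q).
\end{align*}
The last term vanishes: given $(Y,Q)$, the variable $[Y]_\pi$ is constant, so $H([Y]_\pi\mid Y,Q)=0$, and conditional mutual information is bounded by conditional entropy. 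Equating the two expansions gives \eqref{eq:decomp}. If one wants to avoid subtleties with general alphabets, everything can be read with $\mathcal{Y}$ countable and entropies finite, which matches the paper's setting.

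For (i), both terms on the right of \eqref{eq:decomp} are conditional mutual informations and hence nonnegative. Since their sum is $I(Y;S\mid Q)>0$, the first term lies between $0$ and the total, so $\eta_\pi(S)\in[0,1]$.

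For (ii), $\eta_\pi(S)=1$ holds exactly when the within-class term $I(Y;S\mid[Y]_\pi,Q)$ is zero. Conditional mutual information is zero if and only if the corresponding conditional independence $Y\perp\!\!\!\perp S\mid[Y]_\pi,Q$ holds. This is the standard characterization, since it is a conditional KL divergence and KL vanishes only for equal distributions almost surely.

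For (iii), if $S=\sigma(Q,Y)$ depends on $Y$ only through $[Y]_\pi$, then $S=\tilde\sigma(Q,[Y]_\pi)$ is a deterministic function of the conditioning variables $([Y]_\pi,Q)$. Then $H(S\mid[Y]_\pi,Q)=0$, which forces $I(Y;S\mid[Y]_\pi,Q)=0$, and (ii) (or directly the decomposition) gives $\eta_\pi(S)=1$.

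There is no real obstacle here. The only point needing care is justifying that the cross term $I([Y]_\pi;S\mid Y,Q)$ vanishes, and being explicit that ``iff'' in (ii) relies on the equality case of the nonnegativity of conditional mutual information.
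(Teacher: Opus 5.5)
Your proposal is correct and follows the paper's route: the decomposition is the chain rule, using that $[Y]_\pi$ is a deterministic function of $Y$; (i) follows from nonnegativity of both terms; and (iii) follows because the within-class term vanishes when $S$ is a function of $([Y]_\pi,Q)$. You also spell out the two-way expansion that kills the cross term and the zero-conditional-mutual-information characterization behind (ii), both of which the paper leaves implicit.
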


\begin{proof}
Since $[Y]_\pi$ is a deterministic function of $Y$, the chain rule gives $I(Y;S\mid Q)=I([Y]_\pi;S\mid Q)+I(Y;S\mid [Y]_\pi,Q)$. Both terms are nonneg\-ative. Part~(iii): if $S=g(Q,[Y]_\pi)$, then $I(Y;S\mid [Y]_\pi,Q)=0$, giving $\eta_\pi=1$.
\end{proof}

The decomposition formalizes the intuition from the preceding example: the total information injected by any signal splits cleanly into a task-relevant component (resolving which equivalence class the output belongs to) and a within-class component (resolving which specific element within that class). The SFT-vs-RLVR comparison of \Cref{sec:meta-level} is a direct instance: RLVR's binary verifier is $\pi$-measurable ($\eta_\pi=1$), while SFT's instance-level target spends $\log M$ additional bits on within-class identification---capacity that is irrelevant to the task criterion and grows without bound as the number of acceptable solutions $M$ increases.

A natural implication, consistent with known simplicity biases in gradient-based learning, is that learning will preferentially converge to whichever signal component is most information-efficient to exploit. When the training signal contains multiple learnable patterns at different granularities, the model does not choose among them based on task relevance---it converges to the pattern that provides the largest per-sample gradient signal, which is the one operating at the coarsest partition (highest $\eta$). When the intended task signal happens to be the most efficient one, this is desirable: learning is fast, robust, and generalizes broadly. But when a spurious signal operates at a coarser granularity than the intended one, the model will converge to the spurious pattern first---this is the information-theoretic mechanism underlying reward hacking (see \Cref{sec:high-efficiency} for a detailed example). This convergence claim is not a formal consequence of the decomposition alone; it additionally relies on the empirical observation that gradient-based learners preferentially exploit patterns with larger per-sample signal, which coarser partitions provide.

The decomposition also reveals why $\pi$-measurable signals are inherently \textit{robust to noise}. Consider rubric-based RL for instruction following, where an LLM judge evaluates whether responses satisfy a set of rubrics (e.g., ``Does the response use bullet-point format?'', ``Is the reply under 200 words?'') and a sample receives reward~1 only if all rubrics are judged as satisfied. To measure the judge's intrinsic noise, we scored 1{,}000 (prompt, response) pairs 16 independent times each. Sample-level consistency---all 16 scorings producing identical verdicts on every rubric---was only 24\%. Yet RL training using this judge converged to strong out-of-distribution generalization. The reason connects directly to the theorem: because the reward is $\pi$-measurable (it only distinguishes ``all rubrics satisfied'' from ``not all satisfied''), the noise exists entirely \textit{within} each partition block---different rubric items flip in different directions on different runs, but the between-class signal (satisfies-all vs.\ not) remains directionally consistent across samples. Random within-class noise cancels in the gradient; the task-relevant signal accumulates. This is a second advantage of high meta-level signals beyond efficiency: they are structurally immune to noise that lacks systematic directionality at the class level.

\subsection{Application: Why Diversity Beats Volume}

The partition framework also explains a widely observed empirical pattern: data \emph{diversity} often matters more than data \emph{volume} for generalization. A small but diverse dataset can outperform a much larger but narrow one. The mechanism is straightforward once we think in terms of partition coverage.

Given a task-relevant partition $\pi=\{C_1,\dots,C_K\}$ and a $\pi$-measurable signal $S$, consider what happens when the same (prompt, output) pair is observed repeatedly. Since $S$ is a deterministic function of $(Q,[Y]_\pi)$, repeated observations of the same pair contribute \textbf{exactly zero} additional information: letting $S^{(1)},\dots,S^{(\ell)}$ denote signals from successive observations of the same pair, $I([Y]_\pi;\,S^{(\ell+1)}\mid S^{(1)},\dots,S^{(\ell)},\,Q{=}q)=0$ whenever the same output is observed. In contrast, a \textit{new} prompt that exercises an uncovered $\pi$-block contributes up to $H([Y]_\pi\mid Q)$ fresh bits of task-relevant information. The marginal value of a new prompt is therefore bounded below by the gap between the covered and uncovered partition blocks, while the marginal value of a repeated prompt is exactly zero.

This explains why strong alignment has been achieved with only 1{,}000 diverse examples in prior work: the task-relevant partition for alignment (helpful vs.\ harmful vs.\ refusal) has relatively few blocks, so full coverage requires diversity, not volume. Similarly, scaling data-constrained models encounters diminishing returns from repeated data: after the partition is well-covered, additional samples contribute mostly within-class information (if the signal is fine-grained) or zero information (if the same observation is repeated). The practical implication is clear: when designing synthetic-data pipelines, prompt diversity---covering as many distinct $\pi$-blocks as possible---should be prioritized over data volume on already-covered regions.

\subsection{High-Efficiency Information Injection in Practice}
\label{sec:high-efficiency}

The meta-level view suggests a concrete signature of high-efficiency injection: the supervision signal should induce a constraint that transfers across prompts, domains, and even interaction formats. The following examples illustrate how such transfer emerges from meta-level signals.

A first example is to inject a format that is essentially task-agnostic. DeepSeek-R1-style post-training makes this explicit by rewarding \textit{format compliance} in addition to correctness, e.g., requiring the model to wrap intermediate reasoning and final answers inside tags such as \texttt{<think>...</think>}~\cite{guoDeepSeekR1IncentivizesReasoning2025}. Format compliance defines a global constraint on the output space that does not depend on any particular task distribution. Once learned, it manifests as cross-task transfer: the same ``output interface'' applies to math, coding, and natural-language tasks alike.

The second example is chain-of-thought (CoT) compression, where the injected signal is explicitly meta-level. In mastery-gated CoT compression with sample-level soft penalties, the feedback depends only on the length of current rollouts and is applied only after the model already reaches stable correctness~\cite{liReinforcementLearningChain2026}. This signal is agnostic to what the problem is: it does not rely on task identity, domain-specific structure, or even what the model is ``doing,'' as long as the answer remains correct. The result is one-to-all-domain generalization: training on math alone makes reasoning shorter in code, instruction following, and general QA, and can even reduce the number of turns in agentic settings, while preserving accuracy.

The meta-level framework also explains why certain training failures occur. In ongoing work on training a judge model for math, code, and logic evaluation, we constructed training data by pairing positive examples (correct solutions) with negative examples (incorrect solutions). An inadvertent data-construction choice assigned all positives from one model family (Gemini, which tends to produce longer, more detailed outputs with a distinctive prose style) and all negatives from another (MiMo, which produces shorter, more terse outputs). The intended signal was \textit{correctness}; the spurious signal was \textit{length and style}.

The judge quickly learned the spurious shortcut: rather than evaluating mathematical correctness, it learned to predict length and style. The telltale sign was a divergence between two metrics---the judge's training reward continued to rise (it was getting better at distinguishing long from short outputs), while its actual correctness accuracy on a balanced held-out set plateaued (it was not getting better at judging math). Critically, the reward hacking was invisible from inspecting the model's chain-of-thought---it still appeared to reason carefully about mathematical correctness, showing no obvious change in its reasoning patterns. The failure was only detectable through indirect metrics (validation-set output length stopped increasing, indicating the model was no longer exploring).

Re-balancing the data sources so that length and style were decorrelated from correctness immediately eliminated the hacking and restored accuracy growth. In terms of Theorem~\ref{thm:decomp}, length and style define a simpler, more easily exploitable pattern than mathematical correctness---the model can detect them from surface statistics without evaluating reasoning---so the spurious signal achieves higher $\eta$ for the \textit{wrong} partition. The model converges to it preferentially because it is more information-efficient to learn.

Finally, meta-level information injection can happen \textit{without} updating parameters, purely by placing the model in a fixed propose--evaluate--select loop. SATLUTION~\cite{sunAutomaticallyDiscoveringHeuristics2025} targets efficient SAT solvers and LegoNE~\cite{liDiscoveringExpertLevelNash2025} aims to discover general algorithms for approximate Nash equilibria with better provable worst-case guarantees. Both systems apply LLMs as explorers and use external verifiers (benchmarks, formal analyzers) to filter and retain only high-quality candidates. The meta-level signal comes from how the verifier \textit{collapses} the search space: many very different proposals are treated as interchangeable once they satisfy the same executable or formal checks. Empirically, SATLUTION evolves solver variants that outperform the human-designed winners of SAT Competition evaluations, and LegoNE discovers algorithms whose certified worst-case guarantees surpass all previously known human-designed ones.

Taken together, these examples highlight a consistent pattern: strong generalization appears when supervision targets meta-level signals---format compliance, verifier-defined validity, stable process structure, or formally checkable subspaces---so that learning can ``forget'' irrelevant differences and operate on a simpler quotient space of behaviors.

\section{Discussion}

The guiding thesis of this paper---that learning preferentially converges to the most information-efficient signal component available---reframes a phenomenon often treated as a training failure. Reward hacking is conventionally understood as the model ``exploiting'' the reward instead of learning the intended behavior. Under our framework, the model is not failing to learn; it is learning exactly what the training signal most efficiently teaches. When a spurious pattern (such as output length or stylistic cues) happens to operate at a coarser granularity than the intended criterion (such as correctness), gradient-based optimization converges to the spurious pattern first, because it is more information-efficient to exploit. This means reward hacking cannot be eliminated simply by training harder or longer---it requires either ensuring that the intended signal is genuinely the most information-efficient component, or decorrelating spurious coarse-grained patterns from the reward.

The framework also identifies a concrete bottleneck for scaling synthetic data: \textit{verification capacity}. Information-open loops require external signals, and the most efficient such signals are high meta-level verifiers---binary correctness checks, executable tests, formal proof obligations. Accordingly, progress is fastest in domains where such verifiers are readily available: mathematics, code, and formal reasoning. In domains where reliable verification is difficult---open-ended generation, nuanced safety judgments, creative tasks---the loop either remains closed or relies on soft references (LLM judges, human rubrics) that are noisier and may co-drift with the model. The practical implication is that expanding the frontier of what can be reliably verified may matter more for capability gains than scaling data or compute alone.

This perspective echoes Sutton's \textit{Bitter Lesson}~\cite{suttonBitterLesson2019}: general methods that leverage computation ultimately outperform methods that build in human knowledge. In our framing, instance-level supervision---hard-coding particular solutions, formats, or trajectories---is the synthetic-data analogue of ``building knowledge in.'' Meta-level signals, by contrast, specify only what must hold (correctness, format compliance, constraint satisfaction) and let computation discover how to achieve it---precisely Sutton's prescription to invest in methods that \textit{find and capture} structure rather than in hand-crafted structure itself~\cite{suttonBitterLesson2019}.

Our framework is primarily qualitative: it identifies structural conditions for effective synthetic data but does not predict, for instance, the sample complexity of a given pipeline. The convergence thesis is supported by our case studies and by known simplicity biases in gradient-based learning, but it is not a formal consequence of the DPI decomposition alone. Finally, the framework takes the external signal as given; analyzing how verifier errors propagate through iterative training, and how to design verifiers that remain reliable as models improve, are important open directions.

\section*{Acknowledgements}
This work is supported by the Natural Science Foundation of China (Grant No. 62572010).

\section*{Impact Statement}
This paper presents work whose goal is to advance the field of general artificial intelligence. There are many potential societal consequences of our work, none of which we feel must be specifically highlighted here.

\bibliography{ref}
\bibliographystyle{icml2026}

\end{document}